\definecolor{myred}{rgb}{.85, .15, .29}
\newcommand{\R}{{\mathbb{R}}}
\DeclareMathOperator{\Tr}{Tr}
\newtheorem{theorem}{Theorem}[section]
\newtheorem{lemma}[theorem]{Lemma}
\newtheorem{proposition}[theorem]{Proposition}
\title{\LARGE \bf
Threshold Decision-Making Dynamics Adaptive to Physical Constraints and Changing Environment
}
\author{Giovanna Amorim$^{1}$, María Santos$^{1}$, Shinkyu Park$^{2}$, Alessio Franci$^{3}$, and Naomi Ehrich Leonard$^{1}$% <-this % stops a space
\thanks{This work was supported in part by ONR grant N00014-19-1-2556.}% <-this % stops a space
\thanks{$^{1}$ G. Amorim, M. Santos, N.E. Leonard are with the Dept. of Mechanical and Aerospace Engineering, Princeton Univ., Princeton, NJ 08544 USA, \{{\tt\small giamorim, maria.santos, naomi}\}{\tt\small @princeton.edu}}%
\thanks{$^{2}$ S. Park is with King Abdullah Univ. of Science and Technology
(KAUST), Computer, Electrical and Math. Science and Eng. Div., Thuwal 23955-6900, Saudi Arabia, {\tt\small shinkyu.park@kaust.edu.sa}}%
\thanks{$^{3}$ A. Franci is with the Department of Electrical Engineering and Computer Science, University of Liege, 10 Allée de la Découverte, 4000 Liège, Belgium, and with the WEL Research Institute, Wavre, Belgium, {\tt\small afranci@uliege.be}}%
}
\begin{document}

\maketitle
\thispagestyle{empty}
\pagestyle{empty}
%%%%%%%%%%%%%%%%%%%%%%%%%%%%%%%%%%%%%%%%%%%%%%%%%%%%%%%%%%%%%%%%%%%%%%%%%%%%%%%%
\begin{abstract}
We propose a threshold decision-making framework for controlling the physical dynamics of an agent switching between two spatial tasks. Our framework couples a nonlinear opinion dynamics model that represents the evolution of an agent's preference for a particular task with the physical dynamics of the agent. We prove the bifurcation that governs the behavior of the coupled dynamics. We show by means of the bifurcation behavior how the coupled dynamics are adaptive to the physical constraints of the agent. We also show how the bifurcation can be modulated to allow the agent to  switch tasks based on thresholds adaptive to environmental conditions. We illustrate the benefits of the approach through a multi-robot task allocation application for trash collection. 

\end{abstract}

%%%%%%%%%%%%%%%%%%%%%%%%%%%%%%%%%%%%%%%%%%%%%%%%%%%%%%%%%%%%%%%%%%%%%%%%%%%%%%%%
\section{INTRODUCTION} 
\label{sec:intro}

A simple way to model decision-making of an agent selecting one of multiple options is to use a thresholding mechanism:
% threshold %decision-making 
% parameter: 
The agent makes a decision when a specified variable crosses a static or dynamic threshold. %, which may be static or dynamic.
% , a decision is made. 
This is known as a threshold decision model. While easy to implement on physical systems, the model does not typically account for control of and changes in the agent's physical state.

We propose a threshold decision-making framework for %physical systems 
agents dynamically choosing %its allocation 
between two spatial tasks that accounts for the physical state of the agents and does not require a communication network. We use the continuous-time nonlinear opinion dynamics model (NOD)  presented in \cite{bizyaeva2022TAC} for each agent and close the loop %that is influenced by and drives 
with the agent's physical dynamics. 
NOD allows agents to form opinions rapidly and reliably in a variety of applications, e.g. \cite{cathcart2023proactive, park2021tuning, hu2023emergent, bizyaeva2022switching}. The dynamics are useful for application in complex environments because of their tunable sensitivity to input \cite{bizyaeva2022TAC,Leonard_ARpaper2024}. Yet it remains an open question how best to systematize the tuning of sensitivity in closed-loop control of physical systems. 

We present new results on  leveraging the tunable sensitivity of NOD for adaptation to physical constraints and changing environment by a group of agents carrying out threshold decision-making for task allocation. Our  approach combines the ease of implementation of  threshold models with the fast and flexible decision-making of NOD.
%We show how the agents are tunably sensitive to their observations of the changing environment and how yields tunable sensitivity to heterogeneous physical constraints. Thus, 

In our approach each agent is responsive to observed changes in the environment. Since these observed changes may reflect how other agents have affected the environment, agents need not to rely on communication with one another. 
 Benefits of not requiring a communication network include easier scaling of the group and robustness to individual failures.
%Indeed, we design an agent's control law to depend only on its  observations of its own state and the changing environment. 
Our approach is distinguished from the use of NOD in multi-agent task allocation in \cite{bizyaeva2022switching} where physical dynamics are not considered and a communication network is required. 
%We show further that because agents are tunably sensitive to what they observe in their environment, they do not have to rely on communication with other agents. 

%For task allocation, we let an agent's opinion over a task represent its preference for executing that task. %The novelty of our approach is to 
%We couple NOD to the physical dynamics of an agent making allocation decisions about two spatial tasks by having 
%The agent's opinion informs control of its physical state and the agent's physical state informs evolution of its opinion. The result is threshold decision-making that adapts to physical constraints and environmental change and can be implemented for a group of agents without a communication network.

%While applications of NOD to multi-agent task allocation have been explored in \cite{ bizyaeva2022switching}, these applications require a communication network among the agents and their physical dynamics
% of the system 
%are not considered. In contrast, 

We let an agent's opinion over a task represent its preference for executing that task. We close the loop between the agent's opinion dynamics and its physical dynamics by letting the agent's opinion inform control of its physical state and the agent's physical state inform evolution of its opinion. The thresholding scheme we use is motivated by a simple scheme used in the literature: Each agent tracks its own evolving efficiency in completing its current task. If its efficiency falls below a threshold, the agent switches to the other task and begins again to track its efficiency on the new task. Approaches using this scheme include \cite{krieger2000threshold, agassounon2002threshold, castello2016threshold, Nidhi2007Threshold}; however, they do not address how an agent's physical state should respond to its decision to switch tasks.  % and enables the decision-making and physical dynamics to adapt to environmental changes and physical constraints.

% \todo{EDIT THIS paragraph to make it clear that [1] and [5] are different from what we are doing here.}
%As defined in the task allocation taxonomy in \cite{gerkey2004taxonomy}, we consider a single-task robot, multi-robot task formulation. 
The task allocation problem we address is from the class of multi-robot tasks with single-task robots~\cite{gerkey2004taxonomy}. Examples of existing approaches to this class of task allocation problems include centralized and decentralized market-based solutions \cite{dias2004market, dias2006market, liu2005market}. These approaches can handle dynamic environments; however, they %the bidding mechanisms 
can be hard to scale and rely on direct communication. Other works utilize game-theoretic approaches to produce strategies in which agents communicate to choose utility-maximizing tasks in a decentralized fashion \cite{saad2011game, jang2018game} and a dynamic environment~\cite{ShinkyuICRA}. %While some of these works consider changing environments \cite{ShinkyuICRA}, a dynamic model of the environment needs to be provided. 

%In threshold decision-making for task allocation across two tasks, a simple decentralized scheme that does not require a network is the following. Each agent %keeps track of a variable that 
%tracks its own evolving efficiency %of the agent 
%in completing its current task. If its efficiency falls below a threshold, the agent switches to the other task and begins again to track its efficiency on the new task.  %. %starts to keep track of its efficiency at performing the new task. 
%This and other existing decentralized threshold decision-making schemes for task allocation include 
%Approaches in the literature include \cite{krieger2000threshold, agassounon2002threshold, castello2016threshold, Nidhi2007Threshold}; however, they do not address how an agent's physical state should respond to its decision to switch tasks.  

\begin{figure*}
    \centering
    \subfloat[$t=0$\label{subfig:dc_t_0}]
    {\includegraphics[width=.33\textwidth]{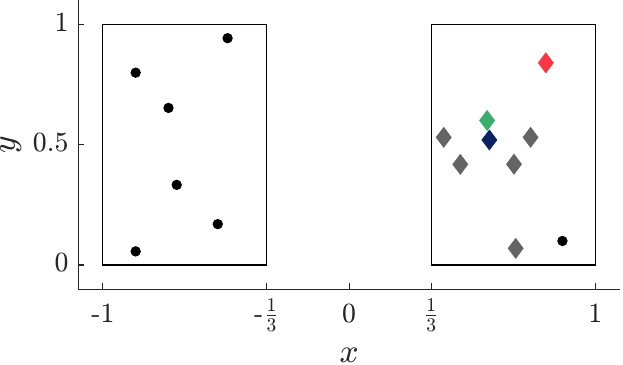}}
    \hfill
    \subfloat[$t=25$\label{subfig:dc_t_50}]
    {\includegraphics[width=.33\textwidth]{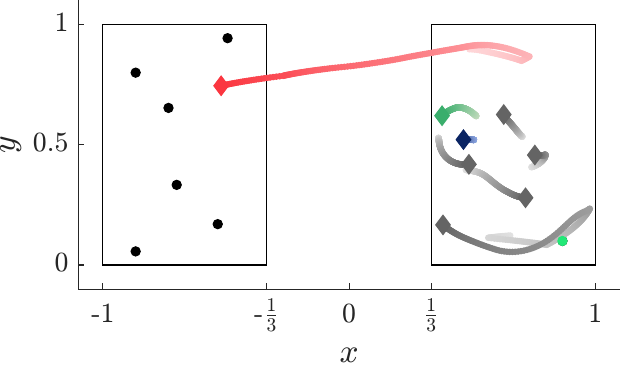}}
    \hfill
    \subfloat[$t=100$\label{subfig:dc_t_150}]
    {\includegraphics[width=.33\textwidth]{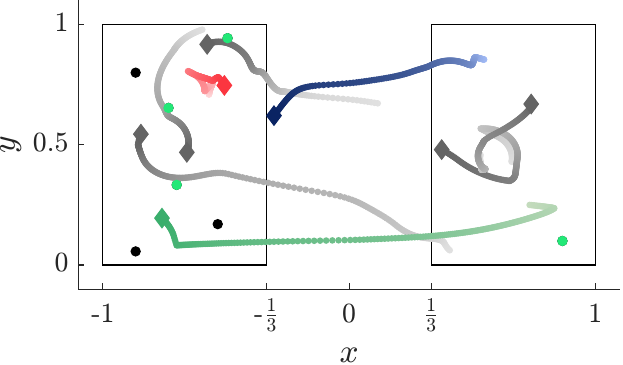}}
    \\
    \subfloat[Trajectory of the red agent.\label{subfig:dc_red}]
    {\includegraphics[width=.33\textwidth]{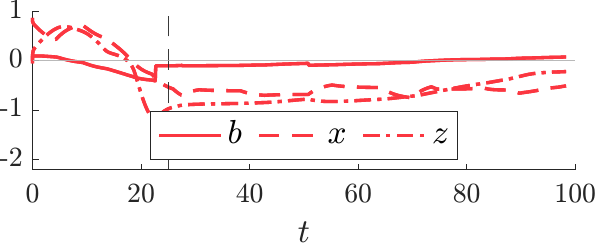}}
    \hfill
    \subfloat[Trajectory of the green agent.\label{subfig:dc_green}]
    {\includegraphics[width=.33\textwidth]{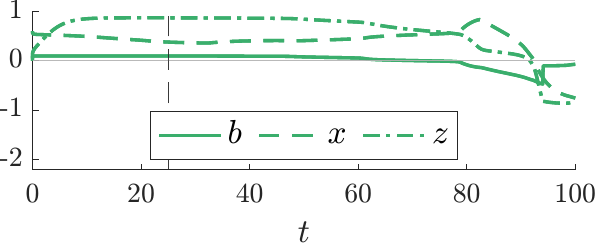}}
    \hfill
    \subfloat[Trajectory of the blue agent.\label{subfig:dc_blue}]
    {\includegraphics[width=.33\textwidth]{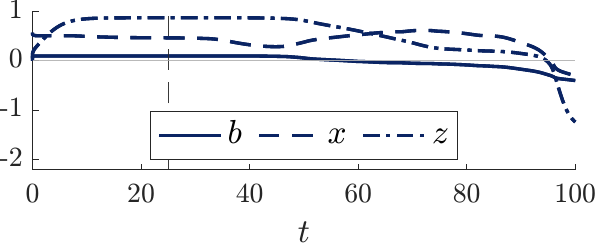}}
    \caption{%Emergent declustering 
    Application of the coupled NOD and physical dynamics model in \eqref{eq:decision_making_model} to multi-robot trash collection across two trash patches. \protect\subref{subfig:dc_t_0}-\protect\subref{subfig:dc_t_150} show the positions of the robots (depicted as diamonds) at $t=0$, $t=25$ and $t=100$s. The patch boundaries are denoted by the rectangles and the uncollected (collected) trash pieces are shown in black (green). The bottom row shows the evolution of the opinion $z_i$, the position $x_i$ and input $b_i$, for the robots highlighted in red, blue, and green in the top row. %\pinkbold{The robots that are less speed limited due to the collision avoidance algorithm switch tasks first, freeing the cluster}. 
    Parameters: $q_{\min} = 1.5$, $u = 1.3$, $d = 1$, $K_z = 2$, $l = 1$, $\sigma = 0.1$, $k = 10$, $K_y = 0.15$.}
    \label{fig:declustering}
\end{figure*}

% 4. Contributions
Our contributions are as follows. First, we propose a threshold decision-making framework for coupling NOD to the physical dynamics of an agent choosing between two spatially divided tasks. Second, we prove that a bifurcation governs opinion formation %in the coupled model 
and that the bifurcation behavior can be used for threshold decision-making. Third, we prove that the coupling allows the model to account for the agent’s physical constraints by implicitly modulating the switching threshold in response to the agent’s physical dynamics. Fourth, we prove how the threshold can be adaptive to environmental changes.
%to changes in the environment.  
Fifth, we validate the theory with simulations of multi-agent task allocation for trash collection.
 
% 5. In section ...

In Section \ref{sec:model} we introduce the decision-making dynamics. In Section \ref{sec:analysis} we establish the governing bifurcation.
%and compare it to NOD
 In Section \ref{sec:application} we illustrate the benefits of our approach to task allocation problems. We provide final remarks %and discussion on future work are included 
 in Section \ref{sec:discussion}.

\section{COUPLED NONLINEAR OPINION DYNAMICS AND PHYSICAL DYNAMICS}
\label{sec:model}
We consider an agent choosing between two spatially separated tasks, which requires an agent to travel when switching tasks, as illustrated in Fig.~\ref{fig:declustering}. %,  for which the agent needs to travel to take on them. 
Our framework couples the nonlinear opinion dynamics (NOD) model proposed in \cite{bizyaeva2022TAC} and a model of the agent's physical dynamics that determines the agent's motion in executing its current task. 
The framework accounts for the agent's physical limitations and the presence of obstacles or mechanical failures as well as environmental inputs.
We review NOD in Section~\ref{sub:NOD}.  We define the physical dynamics in Section~\ref{sub:physical} and 
%Then, in Section~\ref{sub:coupledNOD}, we 
 proposed threshold decision-making framework in Section~\ref{sub:coupledNOD}.

\subsection{Nonlinear Opinion Dynamics}
\label{sub:NOD}
We define the \textit{opinion} $z \in \mathbb{R}$ of an agent as its preference over the two tasks. 
We refer to the region of
%associated with
task~1 (task~2) as patch~1 (patch~2).  %The opinion of the agent about the tasks is represented by a real number $z$. 
When $z > 0 \ (< 0)$, the agent favors task 1 (task 2). If $z = 0$, the agent is neutral %or undecided 
about the tasks. %'s opinion is neutral and it does not have any preference on one over the other. 
The 
%degree of 
commitment of the agent to a task is quantified by 
%the magnitude 
$|z|$.
%of $z$. 
% Hence, the larger the magnitude, the stronger the commitment.

We specialize the continuous-time multi-agent multi-option model presented in \cite{bizyaeva2022TAC} to describe how a single agent's opinion over two tasks varies over time: 
% The opinion of the agent evolves according to 
% is updated according to a linear damping term, a nonlinear reinforcement term, and a bias:
\begin{equation} 
\dot{z} = \Tilde{f}(z) := - d\,z +  u S\left( z \right) + b,
\label{eq:uncoupled_NOD}
\end{equation}
where $d > 0$ is a damping coefficient,  $u>0$ is a gain that represents the ``attention'' the agent pays to reinforcing its own opinion, % \pinkbold{attention parameter} that weighs the importance of the self-reinforcing, nonlinear term $S\left( z \right)$ in forming the agent's opinion, 
% that determines whether the dynamics are dominated by the linear term $-dz$ or nonlinear term $S\left( z \right)$, 
and $b\in\R$ is a bias or input in favor of task 1 (task 2) if $b>0$ ($b<0$). We fix $d$ and let $u$ and $b$  be variable.
$S: \R \rightarrow \R$ is an odd saturating function with $S(0) = 0$, $S^\prime(0) = 1$. For simulations we let $S(\cdot) = \tanh (\cdot)$. %The parameter $d > 0$ is the damping coefficient, while the input $b \in \R$ is a signal for task 1 if $b >0$ and for task 2 if $b < 0$.
% The bias $b$ can be determined as the efficiency of the agent at the current task, as we demonstrate in Section \ref{sec:application}.
% In Section \ref{sec:application}, the magnitude of the bias, $|b|$, will be given by the efficiency of the robot at the current task, with its sign representing the choice between both tasks.
%sign and magnitude of $b$ is determined by a measure of how efficient the robot is at performing the task it is currently performing.

The first term on the right of \eqref{eq:uncoupled_NOD} is a negative feedback that serves to regulate the opinion to the neutral state $z=0$. The second term is a positive feedback that destabilizes $z=0$. The balance of the feedback terms, and thus the stability of $z=0$, is modulated by $u$. The nonlinear saturating function introduces multi-stable solutions so that when $u$ is sufficiently large, $z=0$ is destabilized and $z$ converges to a stable solution where $|z|$ is large.

%One of the key features of (\ref{eq:uncoupled_NOD}), 
As explained in \cite{bizyaeva2022TAC,Leonard_ARpaper2024},  two isolated equilibria, corresponding to an opinion in favor of each of the options, arise through a bifurcation from the neutral equilibrium as $u$ increases. When the agent is unbiased ($b=0$) the model (\ref{eq:uncoupled_NOD}) undergoes a supercritical pitchfork bifurcation at $z=0$ and $u = u^* = d$  (as in Fig.~\ref{fig:bif_diagrams_u} top left). For $u < u^*$, the opinion converges to the neutral state $z=0$. When $u > u^*$, $z=0$ becomes unstable, and two branches of locally exponentially stable opinionated equilibria appear. % from $z = 0$.
If %the agent has a bias, i.e., 
$b > 0$ ($b < 0$), the bifurcation unfolds in the direction of the sign of $b$ (as in Fig.~\ref{fig:bif_diagrams_u} top middle (right)). For $b\neq 0$, when $u$ is near $u^*$ 
there is only one stable equilibrium $z^*$, with $\mathrm{sign}(z^*) = \mathrm{sign}(b)$, and the agent's decision-making becomes sensitive to $b$. 

We leverage this feature of NOD for threshold decision-making by treating $b$ as a threshold variable. For a fixed value of $u > u^*$, as $b$ crosses 0 and $|b|$ becomes large enough so that %for $u$ near %is in the neighborhood of $u^*$  
there is only one stable equilibrium, the sign of $z^*$ changes 
and the agent switches tasks. The switch results from hysteresis between the two decisions. The size of the bistable region, and thus the threshold on $|b|$, is tuned by $u>u^*$: the larger the $u$ the larger $|b|$ has to be to trigger a switch.

\subsection{Physical Agent Dynamics}
\label{sub:physical}

Suppose without loss of generality that the patches for the two tasks have the same rectangular shape and are equidistant from the $x=0$ line in the two-dimensional Euclidean space. We take $l = 1$ to be the distance from the outermost edge of the patches to the $x=0$ line.
%Without loss of generality, we assume that $l=1$. 
See Fig. \ref{fig:declustering} for an illustration of the patches in a multi-agent trash collection example, where each agent explores one of two patches at a given time as it searches and collects trash.
% We consider a multi-agent trash collection application where each of a group of agents explores one of two patches searching for trash, as depicted in Fig. \ref{fig:declustering}.

% Let $\xi \in \R$ be the position of an agent in the direction of the line connecting the center of patch 2 to the center of patch 1, with $\xi = 0$ being at the midpoint of the line. The non-dimensional position of the agent is then given by $x = \xi/l$ where $l > 0$. Fig.~\ref{fig:setup} illustrates the notation. 
% % is the sum of the length of the larger patch and half of the distance between the two patches.
% \todo{Add setup figure, would like this to be on the next page} \redbold{@Naomi do you think we need a figure here? The only reason I added it was because I was struggling to explain what $l$ was in words}.

% \spcmt{A figure illustrating this setup would be really helpful.}

When an agent selects a task, it moves to the associated patch. It then randomly chooses from a uniform distribution a point $(r_x,r_y)$ in the patch. The waypoint the agent uses to carry out its task is defined as $(\bar x(z), \bar y)$ where $\bar y = r_y$ and $\bar{x}(z) = \rho/l \tanh(kz)$ with $k>0$ and $\rho = |r_x|$.
That is, only the $x$-coordinate of the waypoint depends on the agent's opinion state $z$, as depicted in Fig. \ref{fig:declustering}. %, we denote the waypoint by $(\bar x(z), \bar y)$.
% Note that the opinion and physical dynamics are only coupled in the $x$ direction, in the $y$ direction, the robots perform random waypoint search within the limits of the current patch.
% In this model, the agent uses random waypoint navigation to explore the patch it is currently committed to. When the opinion switches (i.e. changes sign), the agent becomes committed to the other patch and the model should drive the agent to the other patch. 

The physical dynamics evolve as a function of the opinion:
\begin{subequations} \label{eq:uncoupled_phys_xy}
\begin{align} 
\dot{x} &= K_x(\bar{x}(z) - x) = \Tilde{h}(z, x) \label{eq:uncoupled_phys} \\
\dot{y} &= K_y(\bar{y} - y). \label{eq:uncoupled_phys_b}
% = \Tilde{h}_y(z, y)
\end{align}
\end{subequations}
% \begin{equation}
%     \bar{x}(z) = \rho\tanh(kz) 
% \end{equation}
% where, $(\bar{x}(z), \bar{y}(z))$ is a waypoint inside the patch the agent is currently committed to. 
%For stability analysis, in Section \ref{sec:analysis}, we 
% assume that the patches have the same length and shape, and 
%adopt $\bar{x}(z) = \rho \tanh(kz)$ where $k, \rho > 0$ is the magnitude of a random point within the boundaries of the patches. 
When the agent arrives at the waypoint, it chooses a new waypoint in the same fashion. %The agent selects a new waypoint when it arrives at the previous one.
% \redbold{A new waypoint is selected whenever an agent reaches the previous waypoint.} 
The positive parameter $k$ determines the weight of the opinion on the physical dynamics. The term $\tanh(kz)$ ensures that when the agent's opinion is sufficiently large in magnitude, the opinion dynamics do not impact the waypoint navigation within the patch, i.e., the agent moves very near to the random point $(r_x,r_y)$.  %(i.e. this prevents the agent from missing its goal position). 
$K_x, K_y > 0$ 
%is a timescale of the dynamics \eqref{eq:uncoupled_phys} that determines %
are proportional gains on the agent's velocity.
%\redbold{can be used to tune the speed of the agent}.
% The larger the values of $K_x$ and $K_y$, the faster the agent.
% \spcmt{If you want to have a parameter defining the timescale of the dynamics, how about using some other notation such as $\tau_x$?}

% \redbold{
% Note that the opinion and physical dynamics are only coupled in the $x$ direction, in the $y$ direction, the robots perform random waypoint search within the limits of the current patch.}  
% \spcmt{$q$ must be the time-varying signal from the environment?}

\subsection{Coupled Opinion and Physical Dynamics}
\label{sub:coupledNOD}
We propose a decision-making framework that takes the physical dynamics (\ref{eq:uncoupled_phys_xy}) into account in the opinion-forming process by coupling \eqref{eq:uncoupled_NOD} and \eqref{eq:uncoupled_phys_xy}. Since the opinion $z$ affects only \eqref{eq:uncoupled_phys}, we omit \eqref{eq:uncoupled_phys_b} in the coupled model. Let
% While NOD (\ref{eq:uncoupled_NOD}) affects the physical dynamics (\ref{eq:uncoupled_phys}), the decision making model does not take into account the physical dynamics. Therefore, 
% we couple the NOD and the physical dynamics of the robot to allow the physical dynamics to influence the decision-making process.
\begin{subequations} \label{eq:decision_making_model}
\begin{align}
    & \dot{z} = f(z,x) := \Tilde{f}(z) - K_z \eta(z)\left(z - x\right) \label{eq:f}\\
    & \dot{x} = h(z,x) := (1 - \eta(z))\Tilde{h}(z,x)  - \eta(z)\left(x - z\right), \label{eq:h}
\end{align}
\end{subequations}
where $\eta(z) = \exp(-z^2/2\sigma^2)$, $\sigma > 0$, sign$(z(0))=$  sign$(x(0))$, and $K_z>0$ is the coupling weight. The function $\eta(z)$ acts as a smooth switch  between waypoint navigation and task switching.  For $|z|$  large, $\eta (z) \approx 0$ and waypoint navigation is on so that the agent moves to the waypoint. For $|z|$ small, $\eta(z) \approx 1$ and task switching is triggered such that the agent moves towards the origin. %For $|z|$ sufficiently small, $\eta (z) \approx 1$, and the waypoint navigation is turned off to maneuver the agent to the origin. When $|z|$ is large, $\eta(z) \approx 0$ and the agent moves to the waypoint $\bar x(z)$ according to the physical dynamics \eqref{eq:uncoupled_phys}. %, where the agent selects a patch based on the sign of its opinion $z$.
$\sigma$ determines how small $|z|$ needs to be to trigger a task switch.

% selected waypoint when navigating the patches. 
% to prevent the coupling term $- K_z \eta(z)\left(z - x\right)$ from causing the robot to miss its goal posit

% affecting task performance when the robot is searching the patch. \spcmt{What do you mean by ``affecting task performance when the robot is searching the patch''? What is the definition of ``task performance''?}
% since it adjusts how small $|z|$ needs to be for the patch transition to take place.
% \pinkbold{We assume sign$(z(0))=$  sign$(x(0))$ }

To make the agent's decision-making adaptive to the changes in the environment, bias $b$ and attention parameter $u$ can be tuned, as we show in Section~\ref{sec:analysis}. % to tune the bias $b$ and the attention parameter $u$. 
The bias $b$, i.e., the threshold variable, is computed in terms of the efficiency of the agent at the current task, as shown in Section \ref{sec:application}.

% \spcmt{Shall we also describe possible ways to determine $u$? I believe $u$ is also a tuning parameter in the simulations in Section\ref{sec:application}.}

\section{ANALYSIS OF DECISION-MAKING BEHAVIOR} 
\label{sec:analysis}
\begin{figure} 
    \centering
    \captionsetup[subfigure]{labelformat=empty}
    \subfloat%[\label{subfig:sup_b_0}$b=0$]
    {\includegraphics[width=0.32\linewidth]{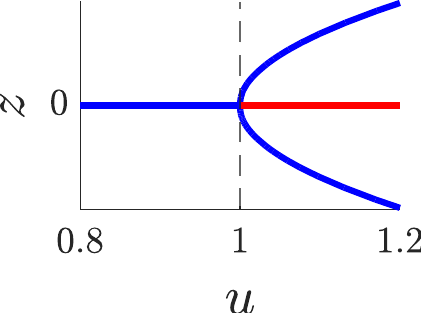}}
    \subfloat%[\label{subfig:sup_b_pos}$b>0$]
    {\includegraphics[width=0.32\linewidth]{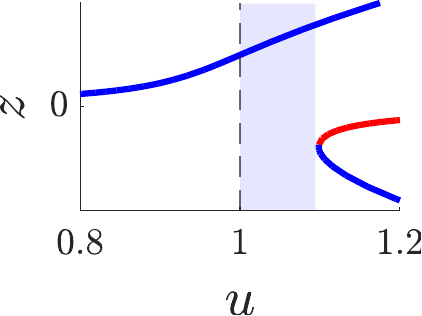}}
    \subfloat%[\label{subfig:sup_b_neg}$b<0$]
    {\includegraphics[width=0.32\linewidth]{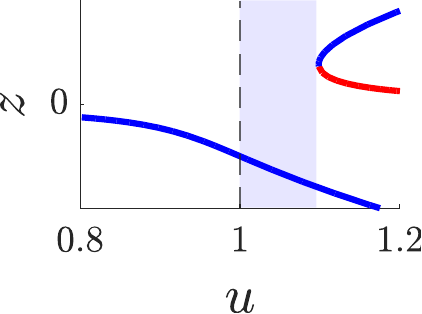}}\\
    \subfloat[\label{subfig:sub_b_0}$b=0$]{\includegraphics[width=0.32\linewidth]{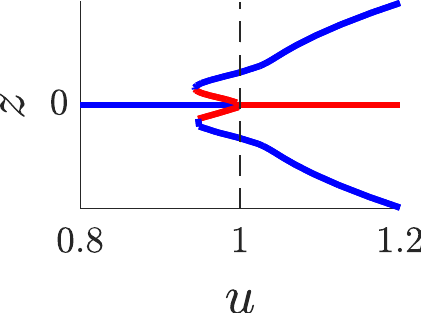}}
    \subfloat[\label{subfig:sup_b_pos}$b>0$]{\includegraphics[width=0.32\linewidth]{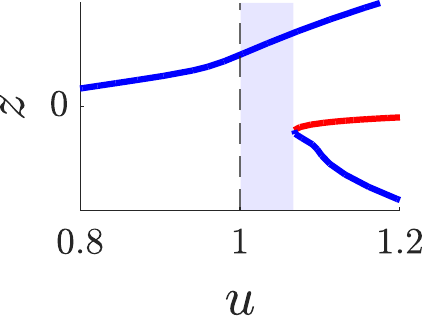}}
    \subfloat[\label{subfig:sup_b_neg}$b<0$]{\includegraphics[width=0.32\linewidth]{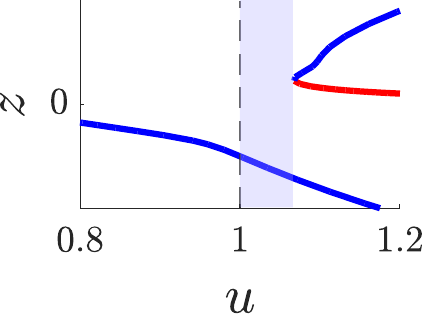}}

    \caption{Bifurcation diagrams for the coupled NOD and physical dynamics of an agent \protect\eqref{eq:decision_making_model}. Each bifurcation diagram plots the equilibrium value of $z$ as a function of a fixed value of $u$. Top row corresponds to parameters for which the system undergoes a symmetric supercritical pitchfork bifurcation and its unfolding. Bottom row corresponds to parameters for which the system undergoes a  symmetric subcritical pitchfork bifurcation with a quintic stabilizing term and its unfolding. Blue (red) lines show stable (unstable) branches of equilibria. The vertical dashed line is $u = u^* $. Regions with only one branch of equilibria for $u > u^*$ are shaded in light blue. Bifurcation diagrams generated with help of MatCont \cite{MatCont}. Parameters: $d = 1$, $b \in \{0,\pm 0.2\}$, $K_x = 3$, $k = 10$, $\sigma = 0.1$, $\rho = 0.5$.}
    \label{fig:bif_diagrams_u}
\end{figure}

In Section~\ref{sub:analysis_NOD} we show how threshold decision-making results through a bifurcation in the coupled dynamics~\eqref{eq:decision_making_model}. In Section~\ref{sec:adaptability_analysis} we show how the threshold decision-making is adaptive to physical constraints and environmental change.

\subsection{Threshold Decision-Making through a Bifurcation}
\label{sub:analysis_NOD}

In a nonlinear dynamical system, a local bifurcation refers to a change in the number and/or stability of solutions as a parameter varies. The parameter and state value at which the change occurs is called a \textit{bifurcation point}.  
%The type of bifurcation observed in NOD is a pitchfork bifurcation, in which a single equilibrium point divides into three. 
The top left bifurcation diagram in Fig. \ref{fig:bif_diagrams_u}, which is a plot of equilibrium values of $z$ as a function of $u$, illustrates a \textit{supercritical} pitchfork bifurcation at bifurcation point $(z,u) = (z^*,u^*) = (0,1)$ in which a (unique) stable equilibrium ($z=0$) becomes unstable and two symmetric equilibria are created. The bottom left diagram of Fig. \ref{fig:bif_diagrams_u} illustrates a \textit{subcritical} pitchfork bifurcation at bifurcation point $(z,u) = (z^*,u^*) = (0,1)$, in which the two symmetric branches that emerge from the stable equilibrium at the bifurcation point are unstable and two further stable branches are created through saddle-node bifurcations. %In this case, these bifurcations are known as a \textit{subcritical} pitchfork bifurcation with a stabilizing term. 
This is a subcritical pitchfork with a quintic stabilizing term bifurcation (see \cite{strogatz2000}).

We first show that, with $u$ as the bifurcation parameter, the dynamics \eqref{eq:decision_making_model} undergo either a supercritical pitchfork or a subcritical pitchfork with a quintic stabilizing term bifurcation. %, or a quintic pitchfork. 
Observe that the neutral state $\left(z, x \right) = (0, 0)$ is always an equilibrium of \eqref{eq:decision_making_model}.

%% Result 1. Stability of the neutral equilibrium 
\begin{lemma} \label{lemma:neutral_eq} (Stability at Neutral Equilibrium) Let $b = 0$.  $J$ is the Jacobian of \eqref{eq:decision_making_model} evaluated at equilibrium $\left(z, x \right) = (0, 0)$. Define $u^* = d$. Then, $\left(z, x \right) = (0, 0)$ is locally exponentially stable for $0 < u < u^*$ and unstable for $u > u^*$. 
\end{lemma}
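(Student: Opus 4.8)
The plan is to linearize \eqref{eq:decision_making_model} about the neutral equilibrium and invoke the Routh--Hurwitz criterion for a planar system, which reduces local exponential stability to the signs of $\Tr J$ and $\det J$. The crucial preliminary observation is that the smooth switch satisfies $\eta(0) = 1$ and $\eta'(0) = 0$, so in the coupling terms every contribution carrying a factor $\eta'(z)$ or $(1-\eta(z))$ vanishes at $(z,x)=(0,0)$. In particular the waypoint-navigation dynamics $\Tilde{h}(z,x)$ and its derivatives drop out entirely, so that the physical-dynamics row of the Jacobian is governed solely by the task-switching term $-\eta(z)(x-z)$.

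First I would compute the four partial derivatives at the origin. Using $S'(0)=1$ and $b=0$ in $\Tilde{f}$, together with $\eta(0)=1$ and $\eta'(0)=0$, I expect
\[
J = \begin{pmatrix} u - d - K_z & K_z \\ 1 & -1 \end{pmatrix}.
\]
The top-left entry collects the damping $-d$, the self-reinforcement $uS'(0)=u$, and the coupling $-K_z\eta(0)$; the off-diagonal entries come from differentiating $K_z\eta(z)x$ and $\eta(z)z$; and the bottom-right entry is $-\eta(0)=-1$, since the $\Tilde{h}$ contribution is suppressed by the factor $(1-\eta(0))=0$.

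Next I would read off the two invariants. A direct computation gives $\det J = (u-d-K_z)(-1) - K_z = d - u$ and $\Tr J = (u-d-K_z) - 1 = (u-d) - K_z - 1$. These two expressions do all the work. For $0 < u < u^* = d$ we have $\det J = d-u > 0$ and, since $u-d<0$ while $K_z+1>0$, also $\Tr J < 0$, so both eigenvalues lie in the open left half-plane and the origin is locally exponentially stable. For $u > u^*$ we have $\det J = d-u < 0$, which forces a pair of real eigenvalues of opposite sign and hence a saddle, i.e.\ instability, independently of the trace.

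The computation is essentially routine; the one place to be careful is the Jacobian of the physical row $h(z,x)$, where terms involving $\eta'(z)\Tilde{h}$, $(1-\eta(z))K_x\bar{x}'(z)$, and $\eta'(z)(x-z)$ appear but each is annihilated at the origin by either $\eta'(0)=0$ or $1-\eta(0)=0$. Confirming that these terms genuinely vanish---rather than contributing the nonzero factor $\bar{x}'(0)=(\rho/l)k$---is the main (if modest) obstacle, since it is precisely why $\det J$ collapses cleanly to $d-u$ and the bifurcation point lands exactly at $u^* = d$, matching the uncoupled model of Section~\ref{sub:NOD}.
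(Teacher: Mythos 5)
Your proposal is correct and follows essentially the same route as the paper: compute the Jacobian at the origin (which agrees entry-for-entry with the paper's, with top-left entry $u-d-K_z$), then conclude via $\Tr J = u-d-K_z-1$ and $\det J = d-u$ that the origin is locally exponentially stable for $0<u<d$ and a saddle for $u>d$. Your explicit justification that the $\Tilde{h}$-terms vanish at the origin because $\eta(0)=1$ and $\eta'(0)=0$ is a detail the paper leaves implicit, but it is the same computation.
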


\begin{proof} 
$J = \big[\begin{smallmatrix} (-d + u - K_z) & K_z\\
        1 & -1
\end{smallmatrix}\big]$
 with eigenvalues $\lambda_{1,2}$.
% %The eigenvalues $\lambda_{1,2}$ of
% \[
%     J = \begin{bmatrix}
%         (-d + u - K_z) & K_z\\
%         1 & -1
%     \end{bmatrix}
% \]
% %The eigenvalues of $J$ 
% are $\lambda_{1,2} =  \frac{1}{2} \left( -(a   \pm    \sqrt{\left(a\right)^2 - 4(d-u)} \right)$ where $a = d - u + K_z + 1$.
Then, $\Tr (J) = \lambda_1 + \lambda_2 =  u - d - K_z - 1$ and $\det (J) = \lambda_1\lambda_2 = d - u$. For $ 0 < u < u^* = d$, $\Tr (J) = \lambda_1 + \lambda_2< 0$ and $\det (J) = \lambda_1\lambda_2 > 0 $, therefore $\lambda_1, \lambda_2 < 0$ so the equilibrium $\left(z, x \right) = (0, 0)$ is locally exponentially stable.
For $u > u^* = d$, $\det (J) = \lambda_1\lambda_2 < 0 $, therefore, $\lambda_1 < 0 < \lambda_2$ so the equilibrium $\left(z, x \right) = (0, 0)$  is unstable.
\end{proof}
To show the existence of a symmetric quintic pitchfork bifurcation at $\left(z, x \right) = (0, 0)$, we first reduce the dynamics of \eqref{eq:decision_making_model} to a 1-D scalar differential equation. At equilibrium, $f(z, x) = 0$ holds, so we can solve for $x$ in terms of $z$:
% \begin{equation}
%    x = z + \left( \frac{dz - uS(z) + b}{K_z \eta(z)}\right).
%    \label{eq:x_eq}
% \end{equation}
\begin{equation}
   x = z + \mu(z), \ \ \  \ \mu(z) =  \frac{dz - uS(z) + b}{K_z \eta(z)}.
   \label{eq:x_eq}
\end{equation}
Substituting (\ref{eq:x_eq}) with $\boldsymbol{\Gamma} = [d, K_x, K_z, k, \sigma, \rho]$ into $h(z, x) = 0$ gives a scalar equilibrium bifurcation problem: 
\begin{equation}
\begin{split}
       g(z,u,b;
       \boldsymbol{\Gamma})\!  =\!  (1 \! -\! \eta \left( z\right))\Tilde{h}\left(z, z \!+ \mu(z) \right) \! - \! \eta(z) \mu(z)\!  =\!  0.
       %\\(z, d, u, b, K_x, K_z, k, \sigma, \rho, l) &= 0
\end{split}
\label{eq:g}
\end{equation}

% b = 0.025
\begin{theorem}\label{prop:pitchfork_quint} (Decision-Making Through Bifurcation) Consider the equilibrium bifurcation problem (\ref{eq:g}) with $u$ as the bifurcation parameter and $b = 0$. Suppose that $-2d/K_z - 3K_x(1 - k\rho)/\sigma = 0$. In a neighborhood of ${\left(u, z \right)} = (u^*, 0)$, the bifurcation problem
is strongly equivalent (in the sense of \cite[Definition VI.2.5]{Golubitsky1985}) to the quintic pitchfork bifurcation problem $-z^5 + (u - u^*)z = 0$, where $z$ is the state and $u$ the bifurcation parameter. If $-2d/K_z - 3K_x(1 - k\rho)/\sigma \neq 0$, at the equilibrium ${\left(z, x \right)} = (0, 0)$ with $u^* = d$, the quintic pitchfork bifurcation unfolds into either a supercritical pitchfork bifurcation if  $-2d/K_z - 3K_x(1 - k\rho)/\sigma < 0$ or into a subcritical pitchfork bifurcation with a quintic stabilizing term if $-2d/K_z - 3K_x(1 - k\rho)/\sigma > 0$.
\end{theorem}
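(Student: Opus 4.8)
The plan is to recognize (\ref{eq:g}) as a $\mathbb{Z}_2$-symmetric scalar bifurcation problem and to verify the defining and nondegeneracy conditions of the recognition problem for the pitchfork family $\epsilon z^5 + \delta(u-u^*)z$ \cite{Golubitsky1985}. Before expanding, I would first simplify $g$. Substituting (\ref{eq:x_eq}) into (\ref{eq:h}), the factor $\eta(z)$ cancels in the second term, so that with $b=0$
\begin{equation*}
g(z,u) = (1-\eta(z))\,\Tilde{h}\!\left(z, x(z)\right) - \frac{dz - uS(z)}{K_z},
\end{equation*}
where $x(z)$ denotes the right-hand side of (\ref{eq:x_eq}). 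Since $S$ and $\bar{x}$ are odd while $\eta$ is even, $x(z)$ is odd in $z$, hence $\Tilde{h}(z,x(z))$ is odd, $(1-\eta(z))$ is even, and therefore $g(-z,u) = -g(z,u)$. This $\mathbb{Z}_2$ symmetry forces every even-order $z$-derivative of $g$ to vanish at $z=0$, so that $g_{zz} = g_{zzzz} = 0$ automatically and $g(z,u) = z\,q(z^2,u)$ for a smooth $q$; the reduction (\ref{eq:x_eq}) is itself a Lyapunov--Schmidt elimination of the nondegenerate variable $x$, so no further reduction is needed.

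Next I would read off the odd Taylor coefficients $c_1, c_3, c_5$ of $g(\cdot,u)$ about $z=0$. Because $(1-\eta(z)) = O(z^2)$, the first term contributes nothing at linear order, so $c_1 = (u-d)/K_z$; thus $g(0,u)\equiv 0$ (the trivial branch, giving $g = g_u = 0$), the condition $g_z(0,u^*)=0$ fixes $u^* = d$, and $g_{zu}(0,u^*) = 1/K_z > 0$, so $\delta = +1$. Evaluating at $u = u^* = d$, where $dz - uS(z) = d(z-S(z)) = O(z^3)$, I would collect the $z^3$ terms coming from $(1-\eta(z))\Tilde{h}$ and from $-(dz-dS(z))/K_z$ and show, with $S=\tanh$, that the cubic coefficient $c_3$ is a positive multiple of $-2d/K_z - 3K_x(1-k\rho/l)/\sigma$. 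Its vanishing is precisely the stated hypothesis and yields $g_{zzz}(0,u^*) = 6c_3 = 0$.

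With $g = g_z = g_{zz} = g_{zzz} = g_{zzzz} = g_u = 0$ established, the remaining nondegeneracy conditions are $g_{zu}(0,u^*)\neq 0$ (already in hand) and $g_{zzzzz}(0,u^*)\neq 0$. The latter is the main obstacle: I must Taylor-expand $g$ to fifth order, which requires multiplying the expansions of $\eta$, $1/\eta$, $\tanh(kz)$, and $S$ and tracking signs carefully, all evaluated \emph{on} the degeneracy surface $c_3 = 0$. The goal is to prove $c_5 < 0$, equivalently $g_{zzzzz}(0,u^*) = 120\,c_5 < 0$, so that $\epsilon = \operatorname{sign}\bigl(g_{zzzzz}\bigr) = -1$. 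The recognition theorem then delivers strong equivalence of $g$ to $-z^5 + (u-u^*)z$. I expect the fifth-order bookkeeping, and especially controlling the \emph{sign} of $c_5$ rather than merely its nonvanishing, to be the crux of the argument.

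Finally, for the nondegenerate case $-2d/K_z - 3K_x(1-k\rho/l)/\sigma \neq 0$, I would treat the cubic coefficient $c_3$ as the modal parameter of the universal unfolding $-z^5 + \alpha z^3 + (u-u^*)z$ of the quintic pitchfork within the symmetric ($b=0$) class. When $c_3 < 0$ the cubic-pitchfork recognition conditions hold with $\operatorname{sign}(g_{zzz}) = -1$ and $g_{zu}>0$, giving strong equivalence to $-z^3 + (u-u^*)z$, a supercritical pitchfork. When $c_3 > 0$ the organizing quintic with $\alpha>0$ produces a subcritical pitchfork whose emergent unstable branches are re-stabilized through saddle-node bifurcations by the $-z^5$ term, i.e. a subcritical pitchfork with a quintic stabilizing term, matching Fig.~\ref{fig:bif_diagrams_u}. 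Matching the sign of $c_3$ to the sign of $-2d/K_z - 3K_x(1-k\rho/l)/\sigma$ (the two having the same sign since $\sigma>0$) then yields the two stated alternatives.
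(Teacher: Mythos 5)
Your proposal follows essentially the same route as the paper: establish the $\mathbb{Z}_2$ symmetry of $g$ in $z$, verify the recognition conditions of \cite[Proposition VI.2.14]{Golubitsky1985} for the quintic pitchfork ($g=g_z=g_u=0$, $g_{zzz}=0$ on the degeneracy surface, $g_{zu}=1/K_z>0$, $g_{zzzzz}<0$), and invoke unfolding theory for the case $g_{zzz}\neq 0$; your preliminary simplification of $g$, the cancellation of $\eta(z)$ in the second term, and the identification of the cubic coefficient with the stated condition all match the paper's computation. The one step you explicitly defer --- the fifth-order expansion and, crucially, the proof that $g_{zzzzz}(0,u^*)<0$ --- is exactly what the paper supplies, in the form of an explicit closed-form expression for $g_{zzzzz}$ asserted to be negative; since the sign of this quantity is what fixes the coefficient of $z^5$ as $-1$ and hence the stabilizing character of the quintic term, your argument is not complete until that computation is carried out and its sign controlled. (To be fair, the paper also states that inequality without arguing why its expression is negative for all admissible parameters, so the remaining burden is comparable in both write-ups.)
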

\begin{proof} For $b=0$, the bifurcation problem (\ref{eq:g}) is $Z_2$ symmetric with respect to $z \mapsto -z$ because $g(-z, u, 0; \boldsymbol{\Gamma}) = -g(z, u, 0; \boldsymbol{\Gamma})$. Let $g_z$ denote the partial derivative of $g$ with respect to $z$ and similarly for higher-order and mixed derivatives. Following the recognition problem in \cite[Proposition VI.2.14]{Golubitsky1985}, we compute
% \begin{gather*}
% \begin{align*}
% & g(z^*, u^*, 0; \boldsymbol{\Gamma}) = 0,  & & g_z(z^*, u^*, 0; \boldsymbol{\Gamma}) = 0, \\
% & g_u(z^*, u^*, 0; \boldsymbol{\Gamma}) = 0, 
% & &g_{zu}(z^*, u^*, 0; \boldsymbol{\Gamma}) 
% %\=tfrac{1}{K_z} 
% > 0, 
% \end{align*} \\
% \begin{split}
%     g_{zzzzz}(z^*, u^*, 0; \boldsymbol{\Gamma})  < 0
%     %- \tfrac{1}{K_z\sigma^2} \left(20K_xK_zk^3\sigma \rho \right.
%     %\\  & \left. + 15 K_x K_z k \rho \right. \left. - 16d\sigma^2  + 20K_xd\sigma - 15K_xK_z\right) < 0,
% \end{split}\\
% \begin{split}
%     g_{zzz}(z^*, u^*, 0; \boldsymbol{\Gamma}) 
%     = -\tfrac{2d}{K_z} - \tfrac{3K_x}{\sigma} \left( 1 - k \rho \right).
% \end{split} \label{eq:g_zzz}
% \end{gather*}
\begin{align*}
 &g(z^*, u^*, 0; \boldsymbol{\Gamma}) = 0,  \quad \quad
 g_z(z^*, u^*, 0; \boldsymbol{\Gamma}) = 0, \\
 &g_u(z^*, u^*, 0; \boldsymbol{\Gamma}) = 0, \quad\ 
 g_{zu}(z^*, u^*, 0; \boldsymbol{\Gamma}) > 0, \\
 &g_{zzzzz}(z^*, u^*, 0; \boldsymbol{\Gamma})  < 0   \\
 &g_{zzz}(z^*, u^*, 0; \boldsymbol{\Gamma}) 
    = -\tfrac{2d}{K_z} - \tfrac{3K_x}{\sigma} \left( 1 - k \rho \right).
\end{align*} 

Since $g_{zzz}(z^*, u^*, 0; \boldsymbol{\Gamma}) = 0$ for $-2d/K_z - 3K_x(1 - k\rho)/\sigma = 0$, 
%we can verify that
% $g_{zzzzz}(z^*, u^*, 0; \boldsymbol{\Gamma}) < 0$, and $g_{zu}(z^*, u^*, 0; \boldsymbol{\Gamma}) > 0$
by \cite[Proposition VI.2.14]{Golubitsky1985}, the symmetric quintic bifurcation recognition problem is complete. 

If $-2d/K_z - 3K_x(1 - k\rho)/\sigma \neq 0$, then the cubic term $g_{zzz}(z^*, u^*, 0; \boldsymbol{\Gamma}) \neq 0 $. Since the cubic term is non-zero, then by \cite[Theorem VI.3.3]{Golubitsky1985}, the quintic pitchfork unfolds into a supercritical pitchfork if the cubic term is negative or into a subcritical pitchfork bifurcation with a quintic stabilizing term if the cubic term is positive. 
\end{proof}

Theorem \ref{prop:pitchfork_quint} reveals that for $u> u^*$, threshold decision-making can be implemented by varying $b$ like in the uncoupled NOD. This is because it follows from unfolding theory for a pitchfork bifurcation \cite[Ch. I]{Golubitsky1985} that when $b \neq 0$, the symmetric pitchfork unfolds such that only one solution, predicted by the sign of $b$, is stable close to the bifurcation point. The top row of Fig. \ref{fig:bif_diagrams_u} illustrates the bifurcation diagrams for the parameters that satisfy the condition for a supercritical pitchfork and its unfolding. The bottom row of Fig. \ref{fig:bif_diagrams_u} shows the bifurcation diagram for parameters that satisfy the condition for the subcritical pitchfork with quintic stabilizing term. Note that in both the subcritical and supercritical cases, for $u > u^*$, there is only one branch of equilibria in the shaded regions. 

Since we are interested in varying $b$ for threshold decision-making, we now investigate how the system bifurcates when $b$ is the bifurcation parameter. 

\begin{proposition} \label{prop:saddle}(Existence of Saddle Node Bifurcations) Take the equilibrium bifurcation problem (\ref{eq:g}) with $b$ as the bifurcation parameter and $u > u^*$. For $u$ near $u^*$, as $b$ varies, there exist two saddle node bifurcations, one at $(z_1^*, b_1^*)$  with $z_1 > 0 > b_1$ and one at $(z_2^*, b_2^*)$ with $z_2 < 0 < b_2$. 
% \AF{Here you have to make clear that you switched from $u$ as the bifurcation parameter to $b$ as the bifurcation parameter.}
\end{proposition}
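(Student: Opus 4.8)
The plan is to view the equilibrium relation \eqref{eq:g} as a curve in the $(z,b)$-plane and to count and locate its fold points, which are exactly the saddle-node bifurcations in the parameter $b$. The structural fact that makes this tractable is that $b$ enters \eqref{eq:g} affinely: since $\Tilde{h}$ is affine in $x$ and $b$ occurs in \eqref{eq:x_eq} only through the combination $dz-uS(z)+b$, substitution into $h=0$ yields $g(z,u,b)=g_0(z,u)+b\,\gamma(z)$ with $\gamma(z)=g_b$ explicit and, because $K_x,K_z>0$ and $\eta\in(0,1]$, bounded away from zero with a fixed sign. Consequently the equilibrium set is globally the graph $b=B(z,u):=-g_0(z,u)/\gamma(z)$, and the saddle-node conditions $g=g_z=0$ with $g_{zz}\neq0$, $g_b\neq0$ translate (using $B'=-g_z/g_b$ and $B''=-g_{zz}/g_b$ along the graph) into $B_z(z,u)=0$ with $B_{zz}(z,u)\neq0$, i.e. a nondegenerate fold. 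Finally, the $Z_2$ symmetry used in Theorem~\ref{prop:pitchfork_quint} makes $g_0(\cdot,u)$ odd and $\gamma$ even, so $B(\cdot,u)$ is odd and its critical points come in pairs $(z_1,b_1)$ and $(-z_1,-b_1)$; it therefore suffices to produce a single positive fold $z_1>0$ and show $B(z_1,u)<0$, which immediately yields the second fold with $z_2<0<b_2$.

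First I would treat the supercritical regime of Theorem~\ref{prop:pitchfork_quint}, where the cubic coefficient $C_3:=g_{zzz}(0,u^*)=-2d/K_z-3K_x(1-k\rho/l)/\sigma$ is negative. Using the normal-form data already in the proof of Theorem~\ref{prop:pitchfork_quint} (namely $g_{zu}=1/K_z$, so $g_z(0,u)\approx(u-u^*)/K_z$, together with $C_3$ and $\gamma(0)$ with $|\gamma(0)|=1/K_z$), the local expansion is $B(z,u)\approx-\gamma(0)^{-1}\big[(u-u^*)z/K_z+\tfrac{C_3}{6}z^3\big]$. Then $B_z=0$ gives $z_1^2=2(u-u^*)/(K_z|C_3|)$, so a pair of folds emerges from the origin at $z=\pm z_1=O(\sqrt{u-u^*})$ as $u$ crosses $u^*$, which lie inside the neighborhood where the expansion is valid; back-substitution gives $B(z_1,u)=-2(u-u^*)z_1/(3K_z\gamma(0))$, which is negative once $\gamma(0)>0$. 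This is the clean case, controlled entirely by the local cubic normal form.

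In the subcritical regime ($C_3>0$, stabilized by $C_5:=g_{zzzzz}(0,u^*)<0$) the folds do not collapse to the origin. Setting $u=u^*$ and solving $B_z(z,u^*)=0$ gives, apart from $z=0$, a pair of roots at $z^2\approx 12C_3/|C_5|=O(1)$. I would establish these two $O(1)$ folds at $u=u^*$ and continue them to all $u$ near $u^*$ by the implicit function theorem applied to $B_z(z,u)=0$ at these nondegenerate zeros ($B_{zz}\neq0$), while the spurious critical point at the origin disappears for $u\neq u^*$ since $B_z(0,u)=-(u-u^*)/(K_z\gamma(0))\neq0$. Evaluating at the positive fold, $g_0(z_1,u^*)=\tfrac{1}{15}C_3 z_1^3+\cdots>0$, whence $B(z_1,u^*)<0$ and, by continuity, $b_1<0$ for $u$ close to $u^*$.

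I expect the subcritical case to be the principal obstacle. Its folds sit at an $O(1)$ distance from the organizing center $(z,u)=(0,u^*)$, so they are not governed by the local strong-equivalence normal form of Theorem~\ref{prop:pitchfork_quint}; making the continuation rigorous requires global control of $B(\cdot,u^*)$ — that $B_z(\cdot,u^*)$ has exactly one positive zero and that $B$ is negative there — which needs either a bound on the terms beyond the quintic truncation or an appeal to the global unfolding structure of the subcritical quintic pitchfork. A secondary but essential point is to verify, via careful sign bookkeeping in \eqref{eq:x_eq}, that $\gamma=g_b>0$, since it is precisely this sign that fixes the claimed correspondence $z_1>0\Leftrightarrow b_1<0$ rather than its reverse.
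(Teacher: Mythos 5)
Your strategy is substantively the same as the paper's --- both arguments lean entirely on the normal-form data established in Theorem~\ref{prop:pitchfork_quint} and then locate the two folds of the constant-term ($b$) unfolding --- but your packaging is different and in some respects cleaner. The paper works abstractly: it invokes strong equivalence to replace $g$ by one of the three model polynomials, then solves $\hat g=\hat g_z=0$ explicitly for the cubic case (obtaining $z_1^*=\sqrt{(u-d)/3}$, $b_1^*=-2((u-d)/3)^{3/2}$) and asserts the quintic and subcritical cases go through identically. You instead exploit the affine dependence of $g$ on $b$ to write the equilibrium set globally as a graph $b=B(z,u)=-g_0/\gamma$ and reduce saddle nodes to nondegenerate critical points of $B(\cdot,u)$; your sign condition $\gamma=g_b>0$ is immediate from the paper's own formula \eqref{eq:g_b}, since $\eta(z)\in(0,1]$ gives $g_b\ge 1/K_z>0$, and the $Z_2$ pairing of the two folds is a nice economy the paper does not use. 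Your supercritical computation ($z_1^2=2(u-u^*)/(K_z|C_3|)$, $b_1=-\tfrac{2}{3}(u-u^*)z_1<0$) matches the paper's normal-form answer, so that case is complete and correct.

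The obstacle you flag in the subcritical regime is real, and it is worth noting that the paper does not actually resolve it either: for $C_3>0$ the fold on the positive branch sits at $z^2=O(1)$ (of order $C_3/|C_5|$), outside any neighborhood in which the local strong equivalence of Theorem~\ref{prop:pitchfork_quint} is guaranteed to hold, yet the paper's proof simply states that ``the same process can be repeated.'' So you have not missed an idea that the paper supplies; you have correctly identified the step that both proofs leave informal. If you wanted to close it, your graph formulation is the right vehicle --- since $b=B(z,u)$ holds globally, it suffices to control the critical points of $B(\cdot,u^*)$ on a fixed interval rather than to extend the germ equivalence --- but that requires quantitative bounds on $g_0$ beyond its fifth-order Taylor polynomial, which neither you nor the paper provides.
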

\begin{proof} Obtaining a closed form solution for $(z_1^*, b_1^*)$, $(z_2^*, b_2^*)$ is non-trivial, so we instead examine the normal forms of the supercritical pitchfork ($\dot{z} = -z^3 + (u - d)z + b$), the quintic pitchfork ($\dot{z} = -z^5 + (u - d)z + b$) and the subcritical pitchfork with a quintic stabilizing term ($\dot{z} = -z^5 + z^3 + (u - d)z + b$). By Proposition \ref{prop:pitchfork_quint}, the bifurcation problem near $u = u^* = d$ is strongly equivalent to one of the three pitchfork bifurcation problems.

For the supercritical pitchfork case, let $\hat{g}(z,b,u,d) = -z^3 + (u - d)z + b$. Following the recognition problem in \cite[Theorem IV.2.1]{Golubitsky1985}), we seek $z^*$, $u^*$ such that
\begin{align}
& \hat{g}(z^*,b^*,u,d)  =   -z^3 + (u - d)z + b  = 0 \label{eq:g_hat} \\ 
& \hat{g}_{z}(z^*,b^*,u,d) = -3z^2 + (u - d) = 0 \label{eq:g_hat_z} \\ 
& \hat{g}_{zz}(z^*,b^*,u,d)  = -6z \neq 0 \label{eq:g_hat_zz} \\ 
& \hat{g}_{b}(z^*,b^*,u,d)  \neq 0. \label{eq:g_hat_b}
\end{align}
We first solve for $z^*$ in (\ref{eq:g_hat_z}). Then, we plug $z^*$ into (\ref{eq:g_hat}) to solve for $b^*$. Note that $u > d$. Then, 
\begin{align*}
    &(z_1^*, b_1^*) = \left( \sqrt{\tfrac{u - d}{3}}, -2\left(\tfrac{u -d}{3}\right)^\frac{3}{2}\right) = -(z_2^*, b_2^*)
    %&(z_2^*, b_2^*) = \left( -\sqrt{\frac{u - d}{3}}, 2\left(\frac{u -d}{3}\right)^\frac{3}{2}\right).
\end{align*}
Note that conditions (\ref{eq:g_hat_b}) and (\ref{eq:g_hat_zz}) are satisfied since $\hat{g}_{b} = 1 \neq 0$ and $z$. Then, for both saddle points,  $z^*, b^* \neq 0$ and sign$(z^*)$ = -sign$(b^*)$. 
The same process can be repeated for the quintic pitchfork and subcritical pitchfork cases. In both instances, we can verify that there are two saddle points that satisfy $z^*, b^* \neq 0$ and sign$(z^*)$ = -sign$(b^*)$. 
\end{proof}
\begin{figure} 
    \centering
    \subfloat{\includegraphics[width=0.49\linewidth]{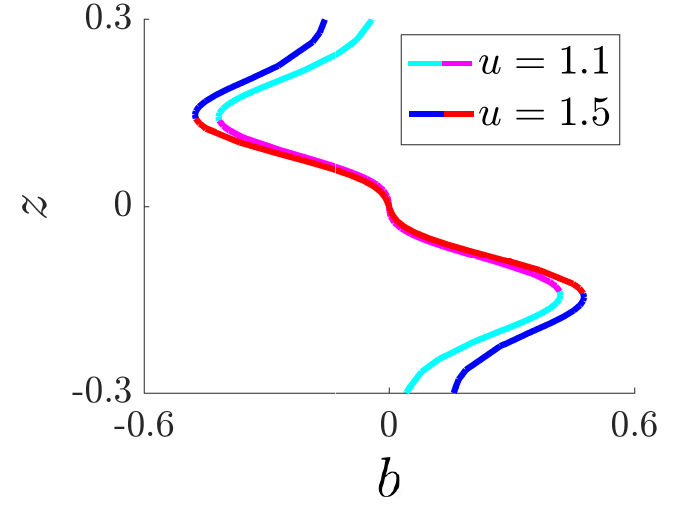}}
    \subfloat{\includegraphics[width=0.49\linewidth]{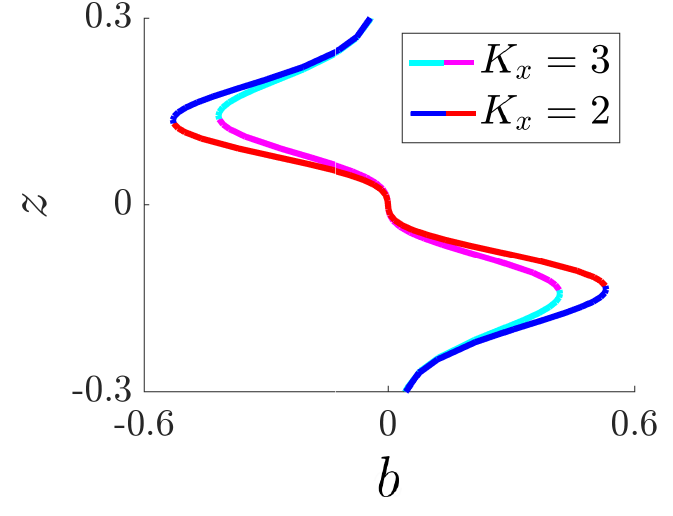}}
    \caption{Bifurcation diagrams for the coupled NOD and physical dynamics of an agent \eqref{eq:decision_making_model} illustrating how $u$ and $K_x$ change the task switching threshold by changing the size of the bistable region. Each bifurcation diagram plots the equilibrium value of $z$ as a function of $b$. Left: bifurcation diagram corresponding to two different values of $u$. Right: bifurcation diagram corresponding to two different values of $K_x$. Blue lines show stable branches of equilibria. Red and magenta lines show unstable branches of equilibria. Bifurcation diagrams generated with MatCont \cite{MatCont}. Parameters: $d = 1$, $k = 10$, $\sigma = 0.1$, $\rho = 0.5$, Left: $K_x = 3$, Right: $u = 1.1$.}
    \label{fig:bif_diagrams_b}
\end{figure}

The saddle nodes serve as the threshold for which the agent switches tasks. As seen in Fig. \ref{fig:bif_diagrams_b}, if the agent starts at the positive branch of stable equilibria, once $b$ gets sufficiently negative and
%is sufficiently negative (i.e. 
crosses $b_1^*$, the negative branch becomes the only stable branch. Similarly, if the agent starts at the negative branch, once $b$ gets sufficiently positive and
%is sufficiently positive (i.e.
crosses $b_2^*$, the positive branch becomes the only stable one. Hence, for threshold decision-making, the saddle nodes act like thresholds with $b$ as the threshold parameter. 

\subsection{Threshold Decision-Making Adaptive to Environmental Changes and  Physical Constraints} \label{sec:adaptability_analysis}

We now show how the system adapts to the agent's physical constraints and how tuning a single parameter allows the system to adapt to changing environments. 

\begin{theorem} \label{prop:adap_env} (Adaptability to the Environment Through Saddle Node Bifurcations) Varying the parameter $u$ changes the size of the bistable region by changing the location of the saddle points $(z_1^*, b_1^*)$ and $(z_2^*, b_2^*)$.  Larger $u$ corresponds to a larger bistable region (i.e. larger $|b|$ is required for switching) and smaller $u$ corresponds to a smaller bistable region (i.e. smaller $|b|$ is required for switching). %\todo{Rephrase this}
\end{theorem}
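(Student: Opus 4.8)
The plan is to exploit the explicit saddle-node coordinates already produced in Proposition~\ref{prop:saddle}. By Theorem~\ref{prop:pitchfork_quint}, in a neighborhood of $u^*=d$ the bifurcation problem \eqref{eq:g} is strongly equivalent to one of the three normal forms $-z^3+(u-d)z+b$, $-z^5+(u-d)z+b$, or $-z^5+z^3+(u-d)z+b$, and for each of these Proposition~\ref{prop:saddle} guarantees exactly two saddle nodes $(z_1^*,b_1^*)$ and $(z_2^*,b_2^*)$ with $b_1^*<0<b_2^*$. Since each normal form is invariant under $z\mapsto -z$, $b\mapsto -b$, the two saddle nodes satisfy $b_2^*=-b_1^*$, so the interval of bias values $(b_1^*,b_2^*)$ for which the system is bistable has width $W(u)=b_2^*-b_1^*=2\,|b_1^*|$. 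I would therefore define the ``size of the bistable region'' as $W(u)$ and reduce the theorem to showing that $W$ is strictly increasing in $u$ for $u>u^*$.

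The second step is to write $W(u)$ in closed form for each normal form. For the supercritical case Proposition~\ref{prop:saddle} already gives $b_1^*=-2\big((u-d)/3\big)^{3/2}$, so $W(u)=4\big((u-d)/3\big)^{3/2}$. For the quintic case the saddle condition $-5z^4+(u-d)=0$ yields $z^*=\pm\big((u-d)/5\big)^{1/4}$ and hence $W(u)=\tfrac{8}{5^{5/4}}(u-d)^{5/4}$. For the subcritical-with-quintic case I would set $w=z^2$ and solve $5w^2-3w-(u-d)=0$, keeping the single admissible root $w(u)=\big(3+\sqrt{9+20(u-d)}\big)/10$; the other root is negative for $u>d$, which simultaneously confirms that exactly two saddle nodes exist, as required by Proposition~\ref{prop:saddle}. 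Eliminating $(u-d)$ through the saddle condition gives $b_1^*=-2w(2w-1)\sqrt{w}$ and $W(u)=4w^{3/2}(2w-1)$.

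The third step is monotonicity. The supercritical and quintic expressions are manifestly increasing on $u>d$, since $\tfrac{d}{du}W>0$ there. For the subcritical case I would argue by the chain rule: $w(u)$ is increasing because $\tfrac{dw}{du}=1/\sqrt{9+20(u-d)}>0$, and $\phi(w):=8w^{5/2}-4w^{3/2}=W$ satisfies $\phi'(w)=2\sqrt{w}\,(10w-3)$, which is positive because $w(u)\ge w(d)=3/5$ forces $10w-3\ge 3>0$. Hence $W$ is strictly increasing in $u$ in all three regimes. Finally I would connect this back to the decision-making interpretation: since the agent on the positive (negative) branch switches only once $b$ crosses $b_1^*$ ($b_2^*$), the bias magnitude needed to trigger a switch is $|b_1^*|=W(u)/2$, so a larger $u$ enlarges the bistable region and raises the switching threshold, exactly as claimed.

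The step I expect to be the main obstacle is the subcritical-with-quintic case, both because the saddle coordinates are not given explicitly in Proposition~\ref{prop:saddle} (only ``the same process can be repeated'' is stated) and because the width depends on $u$ only implicitly through $w(u)$; care is needed to confirm that exactly two saddle nodes persist for all $u>u^*$, so that $W$ is well defined, and that $10w-3>0$ holds throughout. The supercritical and quintic cases, by contrast, are routine once the closed forms are in hand.
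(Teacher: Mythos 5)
Your computations are correct, but you take a genuinely different route from the paper. The paper does not pass to the normal forms at all for this theorem: it applies the implicit function theorem directly to the actual bifurcation function $g$ of \eqref{eq:g}, using the fact that $\partial g/\partial u = S(z)\,\partial g/\partial b$ to obtain the exact derivative $\partial b^*/\partial u = -S(z^*)$ along each fold curve (the identity $db^*/du = -g_u/g_b$ holds at a fold because $g_z=0$ there), and then concludes from the sign pattern $z_1^*>0>z_2^*$, $b_1^*<0<b_2^*$ of Proposition~\ref{prop:saddle} that both saddle nodes move away from the origin as $u$ grows. Your approach instead computes the fold locations in closed form for each of the three normal forms and shows the width $W(u)=2|b_1^*|$ is strictly increasing; this buys explicit formulas ($W\propto(u-d)^{3/2}$, $(u-d)^{5/4}$, and $4w^{3/2}(2w-1)$ with $w=(3+\sqrt{9+20(u-d)})/10$) and, as a bonus, a clean verification that exactly two folds persist in the subcritical case, which the paper only asserts. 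The trade-off is that your argument lives entirely in normal-form coordinates: strong equivalence near $(u^*,0)$ relates the physical parameters $(u,b)$ to the normal-form unfolding parameters through a parameter-dependent change of coordinates, so monotonicity of the normal-form width in the normal-form parameter does not automatically transfer to monotonicity in the physical $u$ without checking that this reparametrization is orientation-preserving and monotone. The paper's IFT computation on $g$ itself sidesteps this entirely (though its own Proposition~\ref{prop:saddle} leans on normal forms in the same way you do, so within the paper's standard of rigor your argument is comparable). If you want to close that gap while keeping your structure, supplement the normal-form computation with the paper's exact identity $\partial b^*/\partial u = -S(z^*)$ evaluated on the true fold branches.
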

\begin{proof}  At the saddle node points $(z_1^*, b_1^*)$ and $(z^*_2, b^*_2)$, 
$g(z_1^*, u, b_1^*; \boldsymbol{\Gamma}) = 0$ and  $g(z_2^*, u, b_2^*; \boldsymbol{\Gamma}) = 0$ . Therefore, by the Implicit Function Theorem, the change in location of the saddle points with respect to $u$ can be expressed as
\begin{equation}
    \frac{\partial b^*}{\partial u} = - \left( \frac{\partial g}{\partial b}\right)^{-1} \frac{\partial g}{\partial u} = -S(z). \label{eq:partial_b}
\end{equation}
% since
% \begin{align}
%     & \frac{\partial g}{\partial b} = \frac{\left( K_x \left( \frac{1}{\eta(z)} - 1\right) + 1 \right)}{K_z} \label{eq:g_b}\\ 
%     & \frac{\partial g}{\partial u} =  S(z) \frac{\left( K_x \left( \frac{1}{\eta(z)} - 1\right) + 1 \right)}{K_z} \, . \label{eq:g_u}
% \end{align}
From Proposition \ref{prop:saddle}, $z^*_1 > 0 > z^*_2$ and $b^*_1 < 0 < b^*_2$, so from (\ref{eq:partial_b}), $\partial b_1^*/{\partial u} < 0$ and $\partial b_2^*/{\partial u} > 0$. Thus, as $u$ increases, the saddle node points move away from the origin.  
\end{proof}
% Adaptive u and changing threshold. First responds to obstacles, we are extending this so that we can also respond to changes in the environment. This could be from sensor readings, agent measurements or even human based input

\begin{figure}
    \centering
    \vspace{0.5cm}
    \includegraphics[width=.99\linewidth]{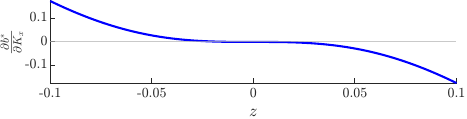} 
    \caption{Evolution of $\partial b^*/\partial K_x$ as a function of $z$. Parameters: $d = 1$, $u = 1.2$, $K_x = 0.05$, $k = 10$, $\sigma = 0.1$, $\rho = 0.8$.}
    \vspace{-0.5cm}
    \label{fig:b_star_partial}
\end{figure}

\begin{proposition} \label{prop:adap_constr}(Adaptability to Physical Constraints Through Saddle Node Bifurcations) Varying the parameter $K_x$ changes the size of the bistable region by changing the location of the saddle points $(z_1^*, b_1^*)$ and $(z_2^*, b_2^*)$.  Larger $K_z$ corresponds to a smaller bistable region (i.e. smaller $|b|$ is needed for switching) and smaller $K_x$ corresponds to a larger bistable region (i.e. larger $|b|$ is needed for switching). 
% \todo{Rephrase this}
\end{proposition}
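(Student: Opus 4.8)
The plan is to mirror the proof of Theorem~\ref{prop:adap_env} and apply the Implicit Function Theorem to the saddle-node conditions, now differentiating with respect to $K_x$ rather than $u$. At each saddle point $(z_i^*,b_i^*)$ from Proposition~\ref{prop:saddle} we have $g(z_i^*,d,u,b_i^*,K_x,K_z,k,\sigma,\rho,l)=0$, so the implicit dependence of the threshold on $K_x$ is governed by
\[
\frac{\partial b^*}{\partial K_x} = -\left(\frac{\partial g}{\partial b}\right)^{-1}\frac{\partial g}{\partial K_x}.
\]
The factor $\partial g/\partial b$ is already available from \eqref{eq:g_b} and is strictly positive for $K_x>0$ and $\eta(z)\in(0,1]$, so the sign of $\partial b^*/\partial K_x$ is controlled entirely by $-\partial g/\partial K_x$.

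First I would compute $\partial g/\partial K_x$. Since $\tilde h(z,x)=K_x(\bar x(z)-x)$ enters $g$ linearly in $K_x$, and the reduced value $x$ obtained from $f=0$ does not depend on $K_x$, direct differentiation gives $\partial g/\partial K_x=(1-\eta(z))(\bar x(z)-x)$. This raw expression has no obvious sign, so the key simplification is to invoke the saddle-node condition $g=0$ itself, i.e. $(1-\eta(z))K_x(\bar x(z)-x)=\eta(z)(x-z)$, to eliminate the bracket. This collapses the derivative to the clean form
\[
\frac{\partial g}{\partial K_x} = \frac{\eta(z)\,(x-z)}{K_x} = -\frac{\tilde f(z)}{K_x K_z},
\]
where the last equality uses $x-z=-\tilde f(z)/(K_z\eta(z))$, again from $f=0$. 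Consequently $\mathrm{sign}(\partial b^*/\partial K_x)=\mathrm{sign}(z-x)=\mathrm{sign}(\tilde f(z))$, evaluated at the saddle point.

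The main obstacle is therefore to pin down the sign of $z^*-x^*$ (equivalently of $\tilde f(z^*)$) at each of the two saddle points. Unlike the $u$ case, where $\partial g/\partial u$ factors as $S(z)\,\partial g/\partial b$ and the sign follows immediately from $\mathrm{sign}(S(z^*))=\mathrm{sign}(z^*)$, here the sign is not read off directly and must be extracted from a leading-order expansion of the reduced equilibrium problem near the bifurcation point. I would substitute the small-$z$ forms $S(z)\approx z$, $\bar x(z)\approx (k\rho/l)z$, $\eta(z)\approx 1$, $1-\eta(z)\approx z^2/2\sigma^2$ into the equilibrium relations $f=0$ and $h=0$, obtain $x^*-z^*$ to leading order, and hence the sign of $\tilde f(z^*)$; the sign data $z_1^*>0>b_1^*$ and $z_2^*<0<b_2^*$ from Proposition~\ref{prop:saddle} then fix each saddle.

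Finally, having established $\partial b_1^*/\partial K_x>0$ at the positive-$z$ saddle and $\partial b_2^*/\partial K_x<0$ at the negative-$z$ saddle, I would conclude that increasing $K_x$ moves both thresholds toward the origin, shrinking the bistable interval $(b_1^*,b_2^*)$, while decreasing $K_x$ enlarges it, which is exactly the claim. I expect the sign bookkeeping in the expansion step to be the delicate part, since the leading-order coefficient turns out to depend on the opinion-to-waypoint gain through the combination $1-k\rho/l$, and its outcome should be checked for consistency across the supercritical and subcritical regimes identified in Theorem~\ref{prop:pitchfork_quint}.
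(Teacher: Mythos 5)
Your skeleton is the same as the paper's: apply the Implicit Function Theorem to the saddle-node condition $g=0$ and read off $\mathrm{sign}(\partial b^*/\partial K_x)$ from $-(\partial g/\partial b)^{-1}\,\partial g/\partial K_x$, with $\partial g/\partial b$ taken from \eqref{eq:g_b}. Where you genuinely improve on the paper is the intermediate algebra: the paper leaves $\partial g/\partial K_x$ in the unwieldy form \eqref{eq:partial_K_x}, whereas your observation that neither $x$ from \eqref{eq:x_eq} nor $\bar{x}$ depends on $K_x$, so that $\partial g/\partial K_x=(1-\eta(z))(\bar{x}(z)-x)$, followed by the use of $g=0$ and $f=0$ to collapse this to $\eta(z)(x-z)/K_x=-\Tilde{f}(z)/(K_xK_z)$, is correct and reduces the entire question to the sign of $z^*-x^*$ (equivalently of $\Tilde{f}(z^*)$) at the two saddle points.

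That reduction, however, is exactly where both you and the paper stop short of a proof. The paper concedes that determining the sign is non-trivial and appeals to a numerically generated curve (Fig.~\ref{fig:b_star_partial}); you instead propose a leading-order expansion of the reduced equilibrium problem near the bifurcation point but do not carry it out, so the decisive claim $\mathrm{sign}(\Tilde{f}(z_i^*))=\mathrm{sign}(z_i^*)$ at the saddles is asserted rather than established. Until that expansion is actually performed (and checked in both the supercritical and subcritical regimes, as you note), your argument has the same gap as the published one, merely relocated to a cleaner and more tractable place. One further point worth flagging: your final signs, $\partial b_1^*/\partial K_x>0$ and $\partial b_2^*/\partial K_x<0$, are the ones consistent with the proposition's conclusion, since with $b_1^*<0<b_2^*$ they make the bistable interval $(b_1^*,b_2^*)$ shrink as $K_x$ grows; the inequalities printed in the paper's proof are the opposite and contradict its own closing sentence (and the convention used in Theorem~\ref{prop:adap_env}), so the mismatch is almost certainly a sign error on the paper's side rather than yours---but it underscores that the sign determination is the real content of this proposition and deserves the analytical treatment you sketch rather than a figure.
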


\begin{proof} The proof follows the same method as the one presented in Theorem \ref{prop:adap_env}, but with respect to $K_x$ instead of $u$. Therefore, the change in location of the saddle points with respect to $K_x$ can be expressed as $\frac{\partial b^*}{\partial K_x} = - \left( \frac{\partial g}{\partial b}\right)^{-1} \frac{\partial g}{\partial K_x}$.
% \begin{equation*}
%     \begin{split}
%     \frac{\partial b^*}{\partial K_x} & = - \left( \frac{\partial g}{\partial b}\right)^{-1} \frac{\partial g}{\partial K_x}. \label{eq:partial_K_x}
% \end{split}
% \end{equation*}
% with
% \begin{equation}
% \begin{split}
%     \frac{\partial g}{\partial K_x} & = \left( \frac{\rho \tanh(kz)}{l} \right. \\
%     & +  \left. \frac{1}{K_z\eta(z)} \left( b - dz + uS(z) - K_zz\eta(z)\right)\right) \,.
% \end{split}
% \end{equation}
Determining the sign of $\frac{\partial b^*}{\partial K_x}$ is non-trivial, so we examine the evolution of $\frac{\partial b^*}{\partial K_x}$ as a function of $z$ for small values of $b^*$. In Fig. \ref{fig:b_star_partial}, we have $\frac{\partial b_1^*}{\partial K_x} < 0$ and $\frac{\partial b_2^*}{\partial K_x} > 0$ for $z^*_1 > 0 > z^*_2$ and $b^*_1 < 0 < b^*_2$. Thus, as $K_x$ increases, the saddle node points moves towards the origin. 
\end{proof}

Theorem~\ref{prop:adap_env} and Proposition  \ref{prop:adap_constr} imply that the decision-making threshold, given by the location of the saddle nodes, can be modulated by $K_x$ and $u$. Fig. \ref{fig:bif_diagrams_b} illustrates how changing both of these parameters have the same global effect: changing the size of the bistable region  effectively changes the decision-making threshold. On the left, we see that increasing $u$ increases the switching threshold. On the right we see that increasing $K_x$, decreases the switching threshold. 
% We take advantage of these behaviors in the applications of this decision-making model to task allocation to allow agents to adapt to their environment by varying $u$ and to their physical constraints by varying $K_x$.

\section{APPLICATIONS TO TASK ALLOCATION}
\label{sec:application}

\begin{figure}
    \centering
    \vspace{0.5cm}
    \subfloat%[\label{subfig:adaptability_trajectories}]
    {\includegraphics[width=0.9\linewidth]{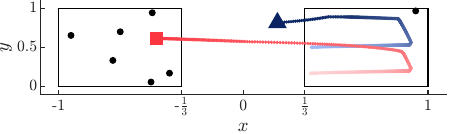}}\\
    \subfloat%[\label{subfig:adaptability_blue}]
    {\includegraphics[width=0.45\linewidth]{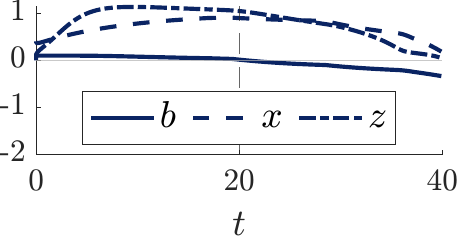}}
    ~~ \subfloat
    %[\label{subfig:adaptability_red}]
    {\includegraphics[width=0.45\linewidth]{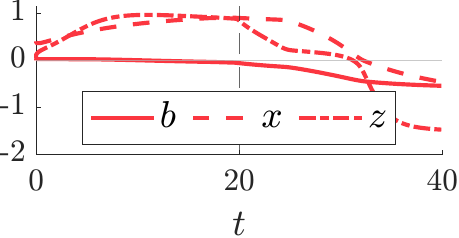}}
    \caption{Two robots (red square and a blue triangle) collecting trash (black circles) across two trash patches (black rectangles) adapt to environmental changes. At $t = 20$, the red robot receives a signal from the environment to decrease $u$ thus increasing its switching threshold. Top: spatial trajectory of the two robots. Bottom: time trajectory of the opinion $z$, the position $x$ and input $b$ of each robot. Parameters: $q_{\min} = 1.5$, $d = 1$, $K_z = 2$, $l = 1$, $\sigma = 0.1$, $k = 10$, $K_y = 0.15$, $K_x = 0.11$, blue robot: $u = 1.3$, red robot before $t = 20$: $u = 1.3$ and after $t = 20$: $u = 1.05$.
    % Before $t = 20$, red, blue robots: $u = 1.3$. After $t = 20$, red robot: $u = 1.05$, blue robot: $u = 1.3$. 
    }
    \vspace{-0.5cm}
    \label{fig:adaptabilityEnvt}
\end{figure}

Through simulations, we show %We illustrate 
how the proposed threshold decision-making and physical dynamics models can be used in multi-robot task allocation 
applications to allow robots to individually adapt to changing environments, how it can accommodate the robots' physical constraints, and how it can be used to reduce congestion in regions with clusters of robots (i.e., a large number of robots close together). 

We study a multi-robot trash collection problem in which  trash-collecting robots with low sensory capabilities use the decision-making model in \eqref{eq:decision_making_model} to select which patch to search and collect trash from based on perceived efficiency of the currently selected patch. In this dynamic task allocation problem, patch resources change as a result of robots picking up the trash and because trash can be added to the patches. We consider the efficiency $q\in\R_+$ of a robot to be given by the ratio of %, The efficiency, $q$, of a robot Each robot's measure of efficiency, $q$, is based on 
perceived resource abundance to energy spent: 
\begin{equation*}
    q = \frac{\mbox{\# of trash pieces collected in the patch} + q_0}{\mbox{distance travelled within patch} + \epsilon},
\end{equation*}
where the constant $q_0>0$ ensures that $q > 0$ and $\epsilon >0$ ensures that $q$ is always defined. In practice, $q_0 >> \epsilon$.

We can calculate the appropriate input $b$ to the decision making in model \eqref{eq:decision_making_model} as a function of the efficiency:
\begin{equation*} \label{eq:b_qual}
    b(q) = s(\tanh(q) - \tanh(q_{\min})),
\end{equation*}
% to calculate the appropriate input $b$ for the decision-making model.
where the value of $q$ is reset and $s$ changes sign whenever a robot enters a patch after switching. 
When the robot enters patch 1 (patch 2), $s = 1$ ($s = -1$). In that way, when $q$ falls below $q_{\min}$, $b$ makes the robot start favoring the other patch. 

%When the robot enters patch 1, $s = 1$, and when the robot enters patch 2, $s = -1$. In that way, when $q$ falls below $q_{\min}$, the $b$ makes the robot start favoring the other patch. 

\paragraph{Adaptability to Environmental Changes} \label{sec:adaptability}
Our approach provides flexibility by allowing robots to adapt their decision-making in response to environment changes, such as changes to global resource availability. These changes can be captured by varying the value of $u$ based on robot measurements or a model of how the resources are changing. 
See Fig. \ref{fig:adaptabilityEnvt} where trash is added to patch 2, and the red 
%square-shaped 
robot receives a signal to increase $u$, effectively increasing its switching threshold. Then, although red and blue robots have the same $b$, the red
%square-shaped 
switches patches before the blue 
%triangle-shaped 
and heads to the patch which now has more abundant resources. 

% - Since switching between spatial tasks takes up energy, when resources are globally scarce, robots can benefit from lowering their switching threshold so that they spend more time searching the patches rather than switching back and forth. 
% - Additionally, since agents are making decisions individually, environmental inputs do not need to be broadcasted to every agent.
% - Also, since agents are making individual decision, there is also added flexibility in that the signal does not need to be broadcasted to every single agent. 

\paragraph{Emergent Explore-Exploit Behavior with Heterogeneous Robots} \label{sec:speed}
%The ability of a robot to perform a task can be influenced by the robots’s physical constraints. 
Seemingly homogeneous robots may still experience some level of heterogeneity since many factors can affect their performance, e.g. battery charge levels, CPU temperatures, motor strain, or payload weight. Additionally, for some applications, a heterogeneous group of robots can be more advantageous. A smaller and faster robot could be better at exploring the environment while a larger and slower one could be better at carrying heavier payloads. 

The coupling term in our decision-making model makes 
%the model 
it adaptive to the physical constraints of the robot. 
%We observe that f
For a group of heterogeneous robots, this feature leads to emergent explore-exploit behavior. 
%As seen 
In Fig. \ref{fig:slowVFast}, given the same $b$, the faster red 
%squared-shaped 
robot switches before the blue 
%triangle-shaped 
robot. Therefore, the faster robot presents a more exploratory behavior while the slower one presents a more exploitative behavior. 

\begin{figure}
    \centering
    \vspace{0.5cm}
    \subfloat%[\label{subfig:adaptability_trajectories}]
    {\includegraphics[width=0.9\linewidth]{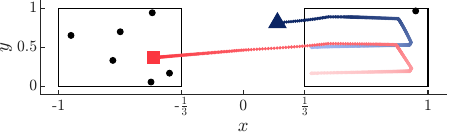}}\\
    \subfloat%[\label{subfig:adaptability_blue}]
    {\includegraphics[width=0.45\linewidth]{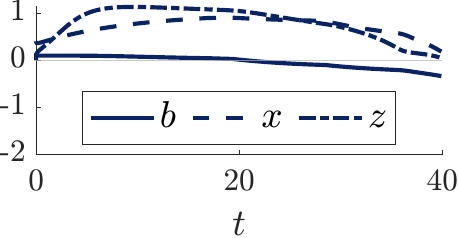}}
    ~~ \subfloat
    %[\label{subfig:adaptability_red}]
    {\includegraphics[width=0.45\linewidth]{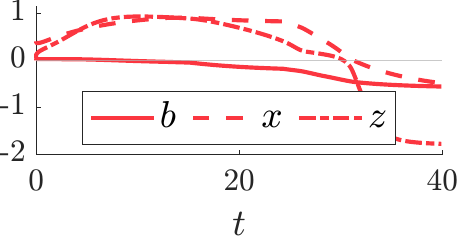}}
    \caption{Two robots with different physical constraints, depicted as a red square and a blue triangle, collecting trash (black circles) across two trash patches (black rectangles). The red robot is faster than the blue robot. Top: spatial trajectory of the two robots. Bottom: time trajectory of the opinion $z$, the position $x$ and input $b$ of each robot. Parameters: $q_{\min} = 1.5$, $u = 1.3$, $d = 1$, $K_z = 2$, $l = 1$, $\sigma = 0.1$, $k = 10$, $K_y = 0.15$, red robot: $K_x = 0.15$. blue robot: $K_x = 0.11$. }
    \label{fig:slowVFast}
\end{figure}

\paragraph{Declustering Behavior} \label{sec:decluster}
%In crowded regions, the congestion can considerably slow down robots for extended periods of time, which prevents them from  performing their assigned tasks. 
%In crowded regions, due to the congestion, robots slow down for extended periods of time, which prevents them from  performing their tasks. 
The simulation in
Fig. \ref{fig:declustering} illustrates how our model relieves traffic congestion. Some of the robots are initially clustered in the center of patch. The control barrier function based collision avoidance algorithm \cite{wang2017CBF} used slows them down by effectively changing their value of $K_x$. The robots that are not near the cluster are free to move faster, %, so their switching threshold is higher. This leads them to s
so they switch more quickly leaving the patch less crowded. Eventually, the cluster disappears. The blue robot, which was slowed down the most due to it being in the center of the cluster is the last robot to switch patches. %since it was slowed down for the lono%has the lowest threshold value. 

\section{DISCUSSION AND FINAL REMARKS}
\label{sec:discussion}
We have presented a threshold decision-making framework that allows for flexible task allocation for spatial task applications through the coupling of opinion dynamics to the physical dynamics of the robot. In future work, we aim to generalize the results in \ref{sec:analysis}
and \ref{sec:adaptability_analysis} to non-symmetric patches,  extend the analysis to the multi-option dynamics of \cite{bizyaeva2022TAC} and explore implementation of the algorithm on physical robots.

% \addtolength{\textheight}{-12cm}   % This command serves to balance the column lengths
%                                   % on the last page of the document manually. It shortens
%                                   % the textheight of the last page by a suitable amount.
%                                   % This command does not take effect until the next page
%                                   % so it should come on the page before the last. Make
%                                   % sure that you do not shorten the textheight too much.

% \section*{ACKNOWLEDGMENT}
\bibliographystyle{./bibliography/IEEEtran}
\bibliography{./bibliography/references_short}

\end{document}